\numberwithin{equation}{section}
\DeclareMathOperator{\kl}{KL}
\newcommand{\bbR}{\mathbb{R}}
\newtheorem{theorem}{Theorem}
\newtheorem{lemma}{Lemma}
\newtheorem{remark}{Remark}
\numberwithin{equation}{section}
\begin{document}

\title{A stochastic version of Stein variational gradient descent
for efficient sampling}

\author[1]{Lei Li\thanks{leili2010@sjtu.edu.cn}}
\author[2]{Yingzhou Li\thanks{yingzhou.li@duke.edu}}
\author[2,3]{Jian-Guo Liu\thanks{jliu@phy.duke.edu}}
\author[2]{Zibu Liu\thanks{zibu.liu@duke.edu}}
\author[2,3,4]{Jianfeng Lu\thanks{jianfeng@math.duke.edu}}

\affil[1]{School of Mathematical Sciences, Institute of Natural
Sciences, MOE-LSC, Shanghai Jiao Tong University, Shanghai, 200240, P.
R. China.}
\affil[2]{Department of Mathematics, Duke University, Durham, NC 27708,
USA.}
\affil[3]{Department of Physics, Duke University, Durham, NC 27708,
USA.}
\affil[4]{Department of Chemistry, Duke University, Durham, NC 27708,
USA.}

\date{}
\maketitle

\begin{abstract}
We propose in this work RBM-SVGD, a stochastic version of Stein
Variational Gradient Descent (SVGD) method for efficiently sampling
from a given probability measure and thus useful for Bayesian
inference. The method is to apply the Random Batch Method (RBM) for
interacting particle systems proposed by Jin et al to the interacting
particle systems in SVGD. While keeping the behaviors of SVGD,
it reduces the computational cost, especially when the interacting
kernel has long range. We prove that the one marginal distribution of
the particles generated by this method converges to the one marginal
of the interacting particle systems under Wasserstein-2 distance on
fixed time interval $[0, T]$. Numerical examples verify the efficiency
of this new version of SVGD.
\end{abstract}

\section{Introduction}

The empirical measure with samples from some probability measure (which
might be known up to a multiplicative factor) has many applications
in Bayesian inference \cite{box2011bayesian,blei2017variational} and
data assimilation \cite{MR3363508}.  A class of widely used sampling
methods is the Markov Chain Monte Carlo (MCMC) methods, where the
trajectory of a particle is given by some constructed Markov chain with
the desired distribution invariant. The trajectory of the particle is
clearly stochastic, and the Monte Carlo methods take effect slowly
for small number of samples.  Unlike MCMC, the Stein variational
Gradient method (proposed by Liu and Wang in \cite{liu2016stein})
belongs to particle based variational inference sampling methods (see
also \cite{rezende2015variational,dai2016provable}). These methods
update particles by solving optimization problems, and each iteration
is expected to make progress.  As a non-parametric variational
inference method, SVGD gives a deterministic way to generate points
that approximate the desired probability distribution by solving
an ODE system.  Suppose that we are interested in some target
probability distribution with density $\pi(x)\propto \exp(-V(x))$
($x\in\mathbb{R}^d$). In SVGD, one sets $V=-\log \pi$ and solve
the following ODE system for given initial points $\{X_i(0)\}$
(see \cite{liu2016stein,liu2017stein}):
\begin{gather}\label{eq:discreteODE}
\dot{X}_i=
\frac{1}{N}\sum_{j=1}^N \nabla_y \mathcal{K}(X_i,X_j)
-\frac{1}{N}\sum_{j=1}^N \mathcal{K}(X_i,X_j)\nabla V(X_j),
\end{gather}
where $\mathcal{K}(x, y)$ is a symmetric positive definite kernel.  When
$t$ is large enough, the empirical measures constructed using
$\{X_i(t)\}$ is expected to be close to $\pi$.

SVGD seems to be more efficient in the particle level for approximating
the desired measure and interestingly, it reduces to the maximum a
posterior (MAP) method when $N=1$ \cite{liu2016stein}.  It provides
consistent estimation for generic distributions as Monte Carlo methods
do, but with fewer samples.  Theoretic understanding of
\eqref{eq:discreteODE} is limited.  For example, the convergence of the
particle system \eqref{eq:discreteODE} is still open. Recently, there
are a few attempts for the understanding of the limiting mean field PDE
\cite{liu2017stein,lu2018scaling}. In particular, Lu et al
\cite{lu2018scaling} showed the convergence of the PDE to the desired
measure.

Though \eqref{eq:discreteODE} behaves well when the particle number $N$
is not very big, one sometimes still needs efficient algorithm to
simulate \eqref{eq:discreteODE}. For example, in a typical MCMC method
$N=10^4\sim 10^6$ while in SVGD, one may have $N=10^2\sim 10^3$. Though
$N=10^2\sim 10^3$ is not large, simulating \eqref{eq:discreteODE} needs
$O(N^2)$ work to compute the interactions for each iteration,
especially for interaction kernels that are not super localized (such as
kernels with algebraic decaying rate, like $K(x)\sim |x|^{-\alpha}$).
The computation cost of SVGD for these cases is therefore comparable
with MCMC with larger number of particles. Hence, it is highly motivated
to develop a cheap version of SVGD.

In this work, we propose RBM-SVGD, a stochastic version of SVGD for
sampling from a given probability measure. The idea is very natural: we
apply the random batch method in \cite{jin2018random} to the interacting
particle system \eqref{eq:discreteODE}. Note that in the random batch
method, the 'batch' refers to the set for computing the interaction
forces, not to be confused with the 'batch' of samples for computing
gradient as in stochastic gradient descent (SGD). Of course, if $V$ is
the loss function corresponding to many samples, or the probability
density in Bayesian inference corresponding to many observed data, the
data-mini-batch idea can be used to compute $\nabla V$ in SVGD as well (see
\cite{liu2016stein}).  With the random batch idea for computing
interaction, the complexity for each iteration now is only $O(N)$.
Moreover, it inherits the advantages of SVGD (i.e. efficient for
sampling when the number of particles is not large) since the random
batch method is designed to approximate the particle system directly. In
fact, we will prove that the one marginal of the random batch method
converges to the one marginal of the interacting particle systems under
Wasserstein-2 distance on fixed time interval $[0, T]$.  Note that the
behavior of randomness in RBM-SVGD is different from that in MCMC. In
MCMC, the randomness is required to ensure that the desired probability
is invariant under the transition. The randomness in RBM-SVGD is simply
due to the batch for computing the interaction forces, which is mainly
for speeding the computation. Though this randomness is not essential
for sampling from the invariant measure, it may have other benefits. For
example, it may lead to better ergodic properties for particle system.

\section{Mathematical background of SVGD}\label{sec:svgd}

We now give a brief introduction to the SVGD proposed in
\cite{liu2016stein} and make some discussion. The derivation here is a
continuous counterpart of that in \cite{liu2016stein}.

Assume that random variable $X\in \mathbb{R}^d$ has density $p_0(x)$.
Consider some mapping $T: \mathbb{R}^d\to\mathbb{R}^d$ and we denote the
distribution of $T(X)$ by $p:=T_{\#}p_0$, which is called the
push-forward of $p_0$ under $T$. The goal is to make $T_{\#}p_0$ closer
to $\pi(x)$ in some sense. The way to measure the closeness of measures
in  \cite{liu2016stein} is taken to be the Kullback-Leibler (KL)
divergence, which is also known as the relative entropy, defined by
\begin{gather}
    \kl(\mu || \nu)=\mathbb{E}_{Y\sim
    \mu}\log\left(\frac{d\mu}{d\nu}(Y)\right),
\end{gather}
where $\frac{d\mu}{d\nu}$ is the well-known Radon-Nikodym derivative. In
\cite[Theorem 3.1]{liu2016stein}, it is shown that the Frechet
differential of $T\mapsto G(T):=\kl(p || \pi)$ is given by
\begin{gather}
    \langle \frac{\delta G}{\delta T}, \phi\rangle = -\mathbb{E}_{Y\sim
    p} S_{\pi}\phi(Y),~~\forall \phi\in C_c^{\infty}(\mathbb{R}^d;
    \bbR^d)
\end{gather}
where $S_{q}$ associated with a probability density $q$ is called the
Stein operator given by
\begin{gather}
    S_{q}\phi(x)=\nabla(\log q(x))\cdot\phi(x)+\nabla\cdot\phi(x).
\end{gather}
In fact, using the formula
\begin{gather}\label{eq:effectofmapping}
    \frac{d}{d\epsilon}(T+\epsilon \phi\circ T)_{\#}p_0
    |_{\epsilon=0}=\frac{d}{d\epsilon}(I+\epsilon
    \phi)_{\#}p|_{\epsilon=0}= -p S_p\phi=-\nabla\cdot(p \phi),
\end{gather}
and $\frac{\delta \kl(p || \pi)}{\delta p}=\log p-\log\pi$, one finds
\begin{gather}
    \langle \frac{\delta G}{\delta T}, \phi\rangle =\left\langle
    \frac{\delta \kl(p || \pi)}{\delta p}, -\nabla\cdot(p \phi)
    \right\rangle =-\int_{\mathbb{R}^d}  p S_{\pi}\phi\,dx.
\end{gather}
The quantity $\langle \frac{\delta G}{\delta T}, \phi\rangle $ can be
understood as the directional derivative of $G(\cdot)$ in the direction
given by $\phi$.

Based on this calculation, we now consider a continuously varying family
of mappings $T_{\tau}, \tau\ge 0$ and
\[
    \frac{d}{d\tau}{T}_{\tau}=\phi_{\tau}\circ T_{\tau}.
\]
Here, '$\circ$' means composition, i.e. for any given $x$,
$\frac{d}{d\tau}T_{\tau}(x)=\phi_{\tau}(T_{\tau}(x))$.  In this sense
$x\mapsto X(\tau; x):=T_{\tau}(x)$ is the trajectory of $x$ under this
mapping; $x$ can be viewed as the so-called Lagrangian coordinate as in
fluid mechanics while $\phi_{\tau}$ is the flow field. We denote
\begin{gather}
    p_{\tau}:=(T_{\tau})_{\#}p_0.
\end{gather}
The idea is then to choose $\phi_{\tau}$ such that the functional $\tau
\mapsto G(T_{\tau})$ decays as fast as possible. Note that to optimize
the direction, we must impose the field to have bounded magnitude
$\|\phi_{\tau}\|_H\le 1$, where $H$ is some subspace of the functions
defined on $\mathbb{R}^d$. The optimized curve $\tau\mapsto T_{\tau}$ is
a constant speed curve (in some manifold).  Hence, the problem is
reduced to the following optimization problem
\begin{gather}\label{eq:opt}
    \sup\{ \mathbb{E}_{Y\sim p} S_{\pi}\phi(Y) | \|\phi\|_H\le 1 \}.
\end{gather}

It is observed in \cite{liu2016stein} that this optimization problem can
be solved by a convenient closed formula if $H$ is the so-called
(vector) reproducing kernel Hilbert space (RKHS)
\cite{aronszajn1950theory,berlinet2011reproducing}.  A (scalar) RKHS is
a Hilbert space, denoted by $\mathcal{H}$, consisting of functions
defined on some space $\Omega$ (in our case $\Omega=\mathbb{R}^d$) such
that the evaluation function $f\mapsto E_x(f):=f(x)$ is continuous for
all $x\in\Omega$. There thus exists $k_x\in \mathcal{H}$ such that
$E_x(f)=\langle f, k_x\rangle_\mathcal{H}$. Then the kernel
$\mathcal{K}(x, y):=\langle k_x, k_y\rangle_\mathcal{H}$ is symmetric
and positive definite, meaning that  $\sum_{i=1}^n\sum_{j=1}^n
\mathcal{K}(x_i, x_j)c_i c_j\ge 0$ for any $x_i\in\Omega$ and $c_i\in
\mathbb{R}$. Reversely, given any positive definite kernel, one can
construct a RKHS consisting of functions $f(x)$ of the form $f(x)=\int
\mathcal{K}(x, y)\psi(y)\,d\mu(y)$ where $\mu$ is some suitably given
measure on $\Omega$. For example, if $\mu$ is the counting measure,
choosing $\psi(y)=\sum_{j=1}^{\infty}a_j 1_{x_j}(y)$
($a_j\in\mathbb{R}$) can recover the form of RKHS in
\cite{liu2016stein}. All such constructions yield isomorphic RKHS as
guaranteed by Moore-Aronszajn theorem \cite{aronszajn1950theory}.  Now,
consider a given $\mu$ and $H=\mathcal{H}^d$ to be the vector RKHS:
\[
    H=\left\{f=\int_{\mathbb{R}^d} \mathcal{K}(\cdot, y)\psi(y)\,d\mu(y)
    \Big| \psi: \mathbb{R}^d\to \mathbb{R}^d,
    \iint_{\mathbb{R}^d\times\mathbb{R}^d} \mathcal{K}(x, y)\psi(x)\cdot
    \psi(y) d\mu(x)d\mu(y)<\infty \right\}.
\]
The inner product is defined as 
\begin{gather}
    \begin{split}
        \langle f^{(1)}, f^{(2)}\rangle_{H}
        &=\iint_{\bbR^d\times\bbR^d} \mathcal{K}(x,
        y)\psi^{(1)}(x)\cdot \psi^{(2)}(y)\,d\mu(x) d\mu(y) \\
        &=\sum_{j=1}^d \iint_{\bbR^d\times\bbR^d} \mathcal{K}(x,
        y)\psi_j^{(1)}(x)\psi^{(2)}_j(y)\,d\mu(x) d\mu(y).
    \end{split}
\end{gather}
This inner product therefore induces a norm $\|f\|_H=\sqrt{\langle f,
f\rangle_H }$. Clearly, $H$ consists of functions with $\|\cdot \|_H$ to
be finite.  The optimization problem \eqref{eq:opt} can be solved by the
Lagrange multiplier method
\[
    \mathcal{L}=\int_{\bbR^d} (S_{\pi}\phi)
    p_{\tau}(y)\,dy-\lambda\iint_{\bbR^d\times\bbR^d} \mathcal{K}(x,
    y)\psi(x)\cdot \psi(y)\,d\mu(x) d\mu(y),
\]
where $dy$ means Lebesgue measure and $\phi(x)=\int_{\bbR^d}
\mathcal{K}(x, y)\psi(y)\,d\mu(y)$. Using $\frac{\delta
\mathcal{L}}{\delta\phi}=0$, we find
\begin{gather}
    2\lambda \phi=\int_{\bbR^d} \mathcal{K}(x,
    y)(S_{\pi}^*p_t)(y)\,dy=:\mathcal{V}(p_t),
\end{gather}
where $S_{\pi}^*$ is given by
\begin{gather}
    S_{\pi}^*(f)=f(y)\nabla(\log\pi)-\nabla f(y)=-f(y)\nabla V(y)-\nabla
    f(y).
\end{gather}
The ODE flow
\[
    \frac{d}{d\tau}T_\tau=\frac{1}{2\lambda(\tau)}\mathcal{V}(p_\tau)\circ
    T_\tau,
\]
gives the constant speed optimal curve, so that the velocity is the unit
vector in $H$ along the gradient of $G$. Re-parametrizing the curve
$t=t(\tau)$ so that $\frac{d \tau}{d t}=2\lambda$,  and we denote
$\rho_{t}:=p_{\tau(t)}$, then
\begin{gather}\label{eq:odeflow}
    \frac{d}{dt}T_t=\mathcal{V}(\rho_t)\circ T_t.
\end{gather}
Clearly, the curve of $T_t$ is not changed by this reparametrization.
Using \eqref{eq:effectofmapping}, one finds that $\rho$ satisfies the
following equation
\begin{gather}\label{eq:meanfield}
    \partial_t \rho=-\nabla\cdot(\mathcal{V}(\rho)
    \rho)=\nabla\cdot(\rho \mathcal{K}*(\rho\nabla V+\nabla \rho)).
\end{gather}
Here, $\mathcal{K}*f(x):=\int\mathcal{K}(x,y)f(y)dy$. It is easy to see
that $\exp(-V)$ is invariant under this PDE.  According to the
explanation here, the right hand side gives the optimal decreasing
direction of KL divergence if the transport flow is measured by RKHS.
Hence, one expects it to be the negation of gradient of KL divergence in
the manifold of probability densities with metric defined through RKHS.
Indeed, Liu made the first attempt to justify this in \cite[Sec.
3.4]{liu2017stein}.

While everything looks great for continuous probability densities, the
above theory does not work for empirical measures because the KL
divergence is simply infinity. For empirical measure, $\nabla \rho$ must
be in the distributional sense. However, the good thing for RKHS is that
we can move the gradient from $\nabla\rho$ onto the kernel
$\mathcal{K}(x,y)$ so that the flow \eqref{eq:odeflow} becomes
\eqref{eq:discreteODE}, which makes perfect sense. In fact, if
\eqref{eq:discreteODE}, holds, the empirical measure is a measure
solution to \eqref{eq:meanfield} (by testing on smooth function
$\varphi$) \cite[Proposition 2.5]{lu2018scaling}.  Hence, one expects
that \eqref{eq:discreteODE} will give approximation for the desired
density. The numerical tests in \cite{liu2016stein} indeed justify this
expectation. In this sense, the ODE system is formally a gradient flow
of KL divergence, though the KL divergence functional is infinity for
empirical measures.

Typical examples of $\mathcal{K}(x, y)$ include $\mathcal{K}(x,
y)=(\alpha x\cdot y+1)^{m}$, Gaussian kernel $\mathcal{K}(x,
y)=e^{-|x-y|^2/(2\sigma^2)}$ for $\mathbb{R}^d$, and $\mathcal{K}(x,
y)=\frac{\sin a(x-y)}{\pi(x-y)}$ for 1D space $\mathbb{R}$.  By
Bochner's theorem \cite{rudin2017}, if a function $K$ has a positive
Fourier transform, then
\begin{gather}
    \mathcal{K}(x,y)=K(x-y)
\end{gather}
is a positive definite kernel. With this kernel, \eqref{eq:discreteODE}
becomes
\begin{gather}\label{eq:discreteODE1}
    \dot{X}_i= -\frac{1}{N}\sum_{j=1}^N \nabla K(X_i-X_j)
    -\frac{1}{N}\sum_{j=1}^N K(X_i-X_j)\nabla V(X_j),
\end{gather}
as used in \cite{lu2018scaling}.  Both Gaussians and $1/|x|^{\alpha}$
with $\alpha\in (0, d)$ have positive Fourier transforms. The difference
is that Gaussian has short range of interaction while the latter has
long range of interaction.  One can smoothen $1/|x|^{\alpha}$ out by
mollifying with Gaussian kernels, resulting in positive definite smooth
kernels but with long range interaction. Choosing localized kernels like
Gaussians may have some issues in very high dimensional spaces
\cite{francois2005locality,detommaso2018stein}. Due to its simplicity,
when the dimension is not very high, we choose Gaussian kernels in
section \ref{sec:experiment}.

As a further comment, one may consider other metric to gauge the
closeness of probability measures, such as Wasserstein distances. Also,
one can consider other norms for $\phi$ and get gradient flows in
different spaces. These variants have been explored by some authors
already \cite{liu2018riemannian,chen2018unified}. In general, computing
the Frechet derivatives in closed form for these variants seems not that
easy.

\begin{remark}
    If we optimize \eqref{eq:opt} for $\phi$ in $L^2(\mathbb{R}^d;
    \mathbb{R}^d)$ spaces, the flow is then given by
    \begin{gather}
        \frac{d}{dt}T= (S_{\pi}^* \rho)\circ T.
    \end{gather}
    The corresponding PDE is $\partial_t\rho=\nabla\cdot(\rho
    (\rho\nabla V+\nabla \rho))=\nabla\cdot(\rho^2\nabla \log
    \frac{\rho}{\pi})$. This is in fact the case when we choose
    $\mathcal{K}(x,y)=\delta(x-y)$. This PDE, however, will not make
    sense for empirical measures since $\rho\nabla\rho$ is hard to
    justify (Clearly, the equivalent ODE system has the same trouble.)
    By using RKHS, the derivative on $\nabla\rho$ can be moved onto the
    kernel and then the ODE system makes sense.
\end{remark}

\section{The new sampling algorithm: RBM-SVGD}

We consider in general the particle system of the following form.
\begin{gather}\label{eq:generalsys}
    \dot{X}_i = \frac{1}{N}\sum_{j=1}^N
    F(X_i,X_j)=\frac{1}{N}F(X_i,X_i)+\frac{1}{N}\sum_{j: j\neq i}F(X_i,
    X_j).
\end{gather}
Here, $F(x, y)$ does not have to be symmetric, and also $F(x, x)$ is not
necessarily zero.

\subsection{The algorithms}

We apply the random batch method in \cite{jin2018random} to this
particle system.  In particular, choose a time step $\eta$. We define
time grid points
\begin{gather}
    t_m= m\eta.
\end{gather}
The idea of random batch method is to form some random batches at $t_m$
and then turn on interactions inside batches only. As indicated in
\cite{jin2018random}, the random division of the particles into $n$
batches takes $O(N)$ operations (we can for example use random
permutation). Depending on whether we do batches without or with
replacement, we can have different versions (see Algorithm
\ref{randomdiv} and \ref{fullyrandom}). For the ODEs in the algorithms,
one can apply any suitable ODE solver. For example, one can use the
forward Euler discretization if $F$ is smooth like Gaussian kernels.  If
$K$ is singular, one may take $p=2$ and apply the splitting strategy in
\cite{jin2018random}.

\begin{algorithm}[H]
    \caption{(Random Batch Method without replacement)}\label{randomdiv}
    \begin{algorithmic}[1]
        \For {$m \text{ in } 1: N_T$}   
            \State Divide $\{1, 2, \ldots, pn\}$ into $n$ batches randomly.
            \For {each batch  $\mathcal{C}_q$} 
                \State Update $X_i$'s ($i\in \mathcal{C}_q$) by
                solving the equation for $t\in [t_{m-1}, t_m)$.
                \begin{gather}\label{eq:firstalgorithm}
                    \dot{X}_i = \frac{1}{N} F(X_i, X_i) + (1 -
                    \frac{1}{N}) \frac{1}{p-1} \sum_{j\in
                    \mathcal{C}_q,j\neq i} F(X_i, X_j).
                \end{gather}
            \EndFor
        \EndFor
    \end{algorithmic}
\end{algorithm}

\begin{algorithm}[H]
    \caption{(Random Batch Method with replacement)}\label{fullyrandom}
    \begin{algorithmic}[1]
        \For {$m \text{ in } 1: N_T*(N/p)$}   
            \State Pick a set $\mathcal{C}$ of size $p$ randomly.
            \State Update $X^i$'s ($i\in \mathcal{C}$) by solving
            the following with pseudo-time $s\in [s_{m-1}, s_m)$.
            \begin{gather}\label{eq:secondalgB}
                \dot{X}_i = \frac{1}{N} F(X_i, X_i) + (1 -
                \frac{1}{N}) \frac{1}{p-1} \sum_{ j \in
                \mathcal{C},j\neq i}F(X_i, X_j).
            \end{gather}
         \EndFor
    \end{algorithmic}
\end{algorithm}

For the Stein Variational Gradient Descent \eqref{eq:discreteODE}, the
kernel $F$ takes the following form.
\begin{gather}\label{eq:forcesvgd}
    F(x, y)=\nabla_y \mathcal{K}(x,y)-\mathcal{K}(x,y)\nabla V(y).
\end{gather}
Applying the random batch method to this special kernel and using any
suitable ODE solvers, we get a class of sampling algorithms, which we
will call RBM-SVGD.  In this work, we will mainly focus on the ones
without replacement. Some discussion for RBM-SVGD with or without
replacement will be made in section \ref{sec:regression}. The one with
forward Euler discretization (with possible variant step size) is shown
in Algorithm \ref{ssvgd}. Clearly, the complexity is $O(pN)$ for each
iteration.

\begin{algorithm}[H]
    \caption{RBM-SVGD}\label{ssvgd}
    \begin{algorithmic}[1]
        \For {$k \text{ in } 0: N_T-1$}   
            \State Divide $\{1, 2, \ldots, pn\}$ into $n$ batches randomly.
            \For {each batch  $\mathcal{C}_q$} 
                \State For all $i\in \mathcal{C}_q$,
                \[
                    X_i^{(k+1)}\leftarrow X_i^{(k)}
                    + \frac{1}{N} \Big( \nabla_y
                    \mathcal{K}(X_i^{(k)}, X_i^{(k)})
                    - \mathcal{K}(X_i^{(k)}, X_i^{(k)})
                    \nabla V(X_i^{(k)})\Big)\eta_{k}
                    +\Phi_{k,i}\eta_k,
                \]
                where
                \begin{gather}
                    \Phi_{k,i} = \frac{N-1}{N(p-1)}
                    \sum_{j\in \mathcal{C}_q,j\neq i}
                    \left(\nabla_y \mathcal{K}(X_i^{(k)},X_j^{(k)})
                    -\mathcal{K}(X_i^{(k)}, X_j^{(k)})
                    \nabla V(X_j^{(k)})\right).
                \end{gather}
            \EndFor
        \EndFor
    \end{algorithmic}
\end{algorithm}
Here, $N_T$ is the number of iterations and $\{\eta_k\}$ is the sequence
of time steps, which play the same role as learning rate in stochastic
gradient descent (SGD). For some applications,  one may simply set $\eta_k=\eta\ll 1$ 
to be a constant and gets reasonably good results. However, in many high dimensional problems,
choosing $\eta_k$ to be constant may yield divergent sequences \cite{robbins1951stochastic}. One may decreas $\eta_k$ to obtain convergent data sequences. For example, one may simply choose $\eta_k=\eta\ll 1$ as in SGD. Another frequently used strategy is the Adagrad approach \cite{duchi2011adaptive,ward2018adagrad}.

\subsection{Theoretic results}

We now give convergence analysis regarding the time continuous version
of RBM-SVGD on torus $\mathbb{T}^d$ (i.e. choosing the particular force
\eqref{eq:forcesvgd} for Algorithm \ref{randomdiv} and
$X_i\in\mathbb{T}^d$). The derivation of SVGD clearly stays unchanged
for torus.  The reason we consider torus is that \eqref{eq:discreteODE}
is challenging to analyze in $\mathbb{R}^d$ because of the nonlocal
effect of the external force. On torus, all functions are smooth and
bounded. Moreover, using bounded domains with periodic boundary
condition can always approximate the problem in $\mathbb{R}^d$ in
practice.

Consider the random force for $z=(x_1, \ldots, x_N)\in\mathbb{T}^{Nd}$
defined by
\begin{gather}
    f_i(z) :=\left(1-\frac{1}{N} \right)
    \frac{1}{p-1}\sum_{j: j\in \mathcal{C}}F(x_i, x_j),
\end{gather}
where $\mathcal{C}$ is the random batch that contains $i$ in the random
batch method.  Correspondingly, the exact force is given by
$F_i(z)=\frac{1}{N}\sum_{j: j\neq i}F(x_i, x_j)$.  Define the 'noise' by
\begin{gather}
    \chi_i(z):=\frac{1}{N}\sum_{j: j\neq i}F(x_i, x_j)-f_i(z).
\end{gather}

We have the following consistency result regarding the random batch.
\begin{lemma}\label{thm:consistency}
    For given $z=(x_1, \ldots, x_N)\in \mathbb{T}^{Nd}$ (or
    $\mathbb{R}^{Nd}$), it holds that
    \begin{gather}
        \mathbb{E}\chi_i(z)=0
    \end{gather}
    Moreover, the second moment is given by
    \begin{gather}
        \mathbb{E}|\chi_i(z)|^2 = (1-\frac{1}{N})^2
        \left(\frac{1}{p-1}-\frac{1}{N-1}\right)\Lambda_i(z),
    \end{gather}
    where
    \begin{gather}
        \Lambda_i(z)=\frac{1}{N-2}
        \sum_{j: j\neq i}\Big| F(x_i, x_j)-\frac{1}{N-1}
        \sum_{k: k\neq i}F(x_i, x_k)  \Big|^2.
    \end{gather}
\end{lemma}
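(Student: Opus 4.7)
The plan is to work with indicator random variables for batch membership and reduce the statement to computing first and second moments of sums of these indicators.

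For the fixed index $i$, let $\xi_{ij} \in \{0,1\}$ (for $j \neq i$) denote the indicator that $j$ lies in the random batch $\mathcal{C}$ containing $i$. Since $\mathcal{C}$ has size $p$ and contains $i$, a uniform-without-replacement counting argument gives the marginal probability $\mathbb{E}\xi_{ij} = (p-1)/(N-1)$, and for distinct $j, k \neq i$ the joint probability that both $j$ and $k$ lie in $\mathcal{C}$ is $\mathbb{E}[\xi_{ij}\xi_{ik}] = (p-1)(p-2)/[(N-1)(N-2)]$. With this notation, I would rewrite
\[
    f_i(z) = \Bigl(1-\tfrac{1}{N}\Bigr)\frac{1}{p-1}\sum_{j\neq i} \xi_{ij} F(x_i,x_j).
\]
The first part of the lemma then reduces to a one-line computation: taking expectations and using $(1-\tfrac{1}{N})\frac{1}{p-1}\cdot\frac{p-1}{N-1}=\frac{1}{N}$ yields $\mathbb{E}f_i(z) = \frac{1}{N}\sum_{j\neq i}F(x_i,x_j) = F_i(z)$, hence $\mathbb{E}\chi_i(z) = 0$.

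For the variance, I would introduce the centered random variables $\eta_{ij} := \xi_{ij}/(p-1) - 1/(N-1)$, so that $\chi_i(z) = -(1-\tfrac{1}{N})\sum_{j\neq i} F(x_i,x_j)\,\eta_{ij}$ and the cross terms cleanly decouple into variances and covariances. Expanding $|\chi_i(z)|^2$ and taking expectations, the diagonal contribution produces $\mathrm{Var}(\xi_{ij})/(p-1)^2$ and the off-diagonal one produces $\mathrm{Cov}(\xi_{ij},\xi_{ik})/(p-1)^2$; both evaluate to a common factor $(N-p)/[(p-1)(N-1)^2]$ times $1$ or $-1/(N-2)$ respectively, using the probabilities above.

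The remaining task is algebraic consolidation. Writing $S := \sum_{j\neq i} F(x_i,x_j)$ and $T := \sum_{j\neq i}|F(x_i,x_j)|^2$, I would observe that the off-diagonal sum equals $|S|^2 - T$, and then combine the two contributions into
\[
    \mathbb{E}|\chi_i(z)|^2 = \Bigl(1-\tfrac{1}{N}\Bigr)^2 \frac{N-p}{(p-1)(N-1)^2(N-2)}\bigl[(N-1)T - |S|^2\bigr].
\]
In parallel, I would expand $\Lambda_i(z)$ directly from its definition as a mean-deviation sum with mean $S/(N-1)$ to obtain $\Lambda_i(z) = [(N-1)T - |S|^2]/[(N-1)(N-2)]$, and then identify $\frac{N-p}{(p-1)(N-1)} = \frac{1}{p-1}-\frac{1}{N-1}$ to recover the form stated in the lemma.

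The main obstacle, though really just bookkeeping, is the correct computation of $\mathbb{E}[\xi_{ij}\xi_{ik}]$ (one must be careful that conditioning on $i\in \mathcal{C}$ changes the hypergeometric sampling from $N$ to $N-1$) and the subsequent algebraic identification of the variance with $(1 - 1/N)^2\bigl(1/(p-1) - 1/(N-1)\bigr)\Lambda_i$; everything else is routine linearity of expectation.
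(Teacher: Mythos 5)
Your proposal is correct and follows essentially the same route as the paper's proof: both arguments hinge on the indicator of $j$ sharing a batch with $i$, the two moments $\mathbb{E}\xi_{ij}=\frac{p-1}{N-1}$ and $\mathbb{E}[\xi_{ij}\xi_{ik}]=\frac{(p-1)(p-2)}{(N-1)(N-2)}$, and then expansion of $|\chi_i|^2$ followed by algebraic rearrangement into the $\Lambda_i$ form. Your centering of the indicators via $\eta_{ij}$ is a purely organizational variant of the paper's direct expansion, and all of your intermediate quantities (the common factor $\frac{N-p}{(p-1)(N-1)^2}$, the identity $\Lambda_i=\frac{(N-1)T-|S|^2}{(N-1)(N-2)}$) check out.
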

The proof is similar as in \cite{jin2018random}, but we also attach it
in the Appendix \ref{app:proofconsistency} for convenience.

We recall that the Wasserstein-$2$ distance is given by
\cite{santambrogio2015}
\begin{gather}\label{eq:W2}
    W_2(\mu, \nu)=\left(\inf_{\gamma \in
    \Pi(\mu,\nu)}\int_{\mathbb{T}^d\times\mathbb{T}^d}|x-y|^2
    d\gamma\right)^{1/2},
\end{gather}
where $\Pi(\mu,\nu)$ is called the transport plan, consisting of all the
joint distributions whose marginal distributions are $\mu$ and $\nu$
respectively: i.e. for any Borel set $E\subset \mathbb{T}^d$,
$\mu(E)=\iint_{x\in E, y\in\mathbb{T}^d}\gamma(dx, dy)$ and
$\nu(E)=\int_{x\in \mathbb{T}^d, y\in E}\gamma(dx, dy)$.

We now state the convergence result for the time continuous version of
RBM-SVGD, where $F(x, y)=\nabla_y
\mathcal{K}(x,y)-\mathcal{K}(x,y)\nabla V(y)$.  We use $\tilde{X}$ to
denote the process generated by the random algorithm while $X$ is the
process by \eqref{eq:discreteODE}.
\begin{theorem}\label{thm:conv}
    Assume $V$ and $K$ are smooth on torus $\mathbb{T}^d$. The initial
    data $X_i^0$ are drawn independently from the same initial
    distribution.  Given $T>0$, there exists $C(T)>0$, such that
    $\mathbb{E}|X_i-\tilde{X}_i|^2\le C(T)\eta$.  Consequently, the one
    marginals $\mu_N^{(1)}$ and $\tilde{\mu}_N^{(1)}$ are close under
    Wasserstein-$2$ distance:
    \[
        W_2(\mu_N^{(1)}, \tilde{\mu}_N^{(1)})\le C(T)\sqrt{\eta}.
    \]
\end{theorem}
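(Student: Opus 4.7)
The plan is to construct a synchronous coupling between the full SVGD system $X=(X_1,\dots,X_N)$ and the RBM-SVGD system $\tilde X=(\tilde X_1,\dots,\tilde X_N)$: both start from the same initial sample, live on a common probability space, and the batches $\{\mathcal{C}_q\}$ drawn in Algorithm \ref{randomdiv} are the only source of randomness in $\tilde X$. Writing $Z_i=X_i-\tilde X_i$, on each window $[t_{m-1},t_m)$ the error equation reads
$$\dot Z_i=\tfrac{1}{N}\bigl(F(X_i,X_i)-F(\tilde X_i,\tilde X_i)\bigr)+\tfrac{1}{N}\sum_{j\neq i}\bigl(F(X_i,X_j)-F(\tilde X_i,\tilde X_j)\bigr)+\chi_i(\tilde X(t)),$$
with $F$ as in \eqref{eq:forcesvgd} and $\chi_i$ the noise of Lemma \ref{thm:consistency}. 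Since $\mathcal{K}$ and $V$ are smooth on the compact torus, $F$ and $\nabla F$ are uniformly bounded, so we get both a pointwise bound $|\chi_i|\le M$ and, via Lemma \ref{thm:consistency}, a variance bound $\bbE|\chi_i(z)|^2\le C/p$ independent of the state $z$.

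Set $u(t)=\tfrac{1}{N}\sum_i \bbE|Z_i(t)|^2$. Computing $\tfrac{d}{dt}|Z_i|^2=2Z_i\cdot\dot Z_i$, averaging in $i$, and applying the Lipschitz property of $F$ together with Cauchy--Schwarz to the systematic drift gives
$$\frac{du}{dt}\le C_1 u(t)+\frac{2}{N}\sum_{i=1}^N\bbE\bigl[Z_i(t)\cdot\chi_i(\tilde X(t))\bigr].$$
The crux is the cross term, which is \emph{not} a martingale increment since the batch drawn at $t_{m-1}$ influences $\tilde X(t)$ throughout $[t_{m-1},t_m)$. Following \cite{jin2018random}, I would freeze at $t_{m-1}$: using $|Z_i(t)-Z_i(t_{m-1})|\le M\eta$ and $|\chi_i(\tilde X(t))-\chi_i(\tilde X(t_{m-1}))|\le L M\eta$ (both from uniform velocity bounds), I decompose the cross term into the ``frozen'' product plus two $O(\eta)$ residuals. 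The frozen product vanishes by the tower property, because the batch on window $m$ is independent of $\mathcal{F}_{m-1}$ and Lemma \ref{thm:consistency} gives $\bbE[\chi_i(\tilde X(t_{m-1}))\mid\mathcal{F}_{m-1}]=0$. The residuals are controlled by $\eta\,\bbE|\chi_i|^2\le C\eta/p$ and by $\eta\,\bbE(|Z_i|\cdot|\chi_i|)\le C\eta(1+u(t))$ via Young's inequality, so altogether the cross term is bounded by $C_2\eta(1+u(t))$, giving the differential inequality $\tfrac{du}{dt}\le C_3 u(t)+C_4\eta$.

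Grönwall's lemma then yields $u(t)\le C(T)\eta$ on $[0,T]$, and exchangeability of $(X_i,\tilde X_i)$ (preserved from i.i.d.\ initial data by the permutation symmetry of both dynamics) upgrades this to $\bbE|X_i(T)-\tilde X_i(T)|^2\le C(T)\eta$ for every $i$. For the Wasserstein bound, the joint law of $(X_i(T),\tilde X_i(T))$ is an admissible coupling of $\mu_N^{(1)}$ and $\tilde\mu_N^{(1)}$, so by definition of $W_2$,
$$W_2(\mu_N^{(1)},\tilde\mu_N^{(1)})\le\sqrt{\bbE|X_i(T)-\tilde X_i(T)|^2}\le C(T)\sqrt{\eta}.$$
I expect the main obstacle to be the bookkeeping in the cross-term expansion: a naïve $|\chi_i|^2\le M^2$ estimate would lose all $\eta$-smallness, so one must feed the $O(1/p)$ variance bound from Lemma \ref{thm:consistency} into the quadratic residuals to keep the per-window contribution genuinely $O(\eta)$ rather than $O(1)$.
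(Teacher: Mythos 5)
Your proposal is correct and follows essentially the same route as the paper: synchronous coupling, a Grönwall estimate on $u(t)=\bbE|X_i-\tilde X_i|^2$, freezing the cross term at $t_{m-1}$ so that the zero-mean property of $\chi_i$ from Lemma \ref{thm:consistency} kills the leading product, and controlling the residual by the $O(\eta)$ variance contribution. The only (harmless) difference is that you evaluate the noise at $\tilde X$ and Lipschitz-bound the full-sum force $F_i$, whereas the paper evaluates the noise at $X$ and Lipschitz-bounds the batch force $f_i$; your variant slightly simplifies the drift estimate since the full sum is batch-independent.
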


\begin{proof}
    In the proof below, the constant $C$ will represent a general
    constant independent of $N$ and $p$, but its concrete meaning can
    change for every occurrence.

    Consider the corresponding two processes and $t\in [t_{m-1}, t_m]$.
    \begin{multline}\label{eq:twoprocess}
        \frac{d}{dt}\tilde{X}_i = \frac{1}{N} \left(\nabla_y
        \mathcal{K}(\tilde{X}_i,\tilde{X}_i) - \mathcal{K}(\tilde{X}_i,
        \tilde{X}_i) \nabla V(\tilde{X}_i)\right)\\
        +\frac{1-1/N}{p-1}\sum_{j: j\in \mathcal{C}}(\nabla_y
        \mathcal{K}(\tilde{X}_i,\tilde{X}_j) -
        \mathcal{K}(\tilde{X}_i,\tilde{X}_j)\nabla V(\tilde{X}_j)).
    \end{multline}
    and
    \begin{multline}
        \frac{d}{dt}X_i = \frac{1}{N}\left(\nabla_y
        \mathcal{K}(X_i,X_i)-\mathcal{K}(X_i, X_i)
        \nabla V(X_i)\right) \\
        +\frac{1}{N}\sum_{j: j\neq i}(\nabla_y
        \mathcal{K}(X_i,X_j)-\mathcal{K}(X_i,X_j)\nabla V(X_j)).
    \end{multline}

    Taking the difference and dotting with $\tilde{X}_i-X_i$, one has
    \[
        (\tilde{X}_i-X_i)\cdot \frac{d}{dt}(\tilde{X}_i(t)-X_i(t)) \le
        \frac{C}{N}|\tilde{X}_i(t)-X_i(t)|^2+(\tilde{X}_i(t)-X_i(t))\cdot
        (I_1+I_2)
    \]
    where
    \begin{multline*}
        I_1 = \frac{1-1/N}{p-1}
        \Big(\sum_{j: j\in \mathcal{C}}(\nabla_y
        \mathcal{K}(\tilde{X}_i,\tilde{X}_j)
        -\mathcal{K}(\tilde{X}_i,\tilde{X}_j)\nabla V(\tilde{X}_j))\\
        -\sum_{j: j\in \mathcal{C}}(\nabla_y \mathcal{K}(X_i,X_j)
        -\mathcal{K}(X_i,X_j)\nabla V(X_j)) \Big),
    \end{multline*}
    \begin{multline*}
        I_2 = \frac{1-1/N}{p-1}\sum_{j: j\in \mathcal{C}}(\nabla_y
        \mathcal{K}(X_i,X_j)-\mathcal{K}(X_i, X_j)\nabla V(X_j)) \\
        -\frac{1}{N}\sum_{j: j\neq i}(\nabla_y \mathcal{K}(X_i,X_j)
        -\mathcal{K}(X_i, X_j)\nabla V(X_j)).
    \end{multline*}

    Hence, introducing
    \[
        u(t)=\mathbb{E}|X_i(t)-\tilde{X}_i(t)|^2
        =\mathbb{E}|X_1(t)-\tilde{X}_1(t)|^2,
    \]
    we have
    \[
        \frac{d}{dt}u\le \frac{C}{N}u(t)
        + \mathbb{E}(X_i-\tilde{X}_i)\cdot I_1
        +\mathbb{E}(X_i-\tilde{X}_i)\cdot I_2.
    \]

    Due to the smoothness of $K$ and $V$ on torus, we easily find
    \[
    |I_1|\le C\frac{1}{p-1}\sum_{j\in\mathcal{C},j\neq
    i}(|X_i-\tilde{X}_i|+|X_j-\tilde{X}_j|)
    =C|X_i-\tilde{X}_i|+C\frac{1}{p-1}\sum_{j\in\mathcal{C},j\neq
    i}|X_j-\tilde{X}_j|,
    \]
    where $C$ is independent of $N$. Note that $\mathcal{C}$ is not
    independent of $X_j$ for $t>t_{m-1}$, so to continue we must
    consider conditional expectation.  Let $\mathcal{F}_{m-1}$ be the
    $\sigma$-algebra generated by $X_i(\tau), \tilde{X}_i(\tau)$ for
    $\tau\le t_{m-1}$ (including the initial data drawn independently)
    and the random division of the batches at $t_{m-1}$. Then,
    \eqref{eq:twoprocess} directly implies almost surely it holds that
    \begin{gather}\label{eq:conditionalexpect}
    \mathbb{E}(|X_j(t)-X_j(t_{m-1})| | \mathcal{F}_{m-1})\le C\eta,
    ~\mathbb{E}(|\tilde{X}_j(t)-\tilde{X}_j(t_{m-1})| |
    \mathcal{F}_{m-1})\le C\eta.
    \end{gather}
    Thus, defining the error process
    \begin{gather}
        Y_i(t)=\tilde{X}_i(t)-X_i(t),
    \end{gather}
    we have
    \begin{gather}
        \mathbb{E}(|Y_i(t)-Y_i(t_{m-1})|)\le C\eta,
    \end{gather}
    yielding
    \begin{gather}\label{eq:incrementofsqrtu}
        |\sqrt{u}(t)-\sqrt{u}(t_{m-1})|\le C\eta.
    \end{gather}

    Note that
    \[
    \mathbb{E}\left(
    |X_i-\tilde{X}_i|\frac{1}{p-1}\sum_{j\in\mathcal{C},j\neq
    i}|X_j-\tilde{X}_j|\right) \le
    \sqrt{u}\left(\frac{1}{p-1}\mathbb{E}\sum_{j\in\mathcal{C},j\neq
    i}|X_j-\tilde{X}_j|^2\right)^{1/2}.
    \]
    The inside of the parenthesis can be estimated as
    \begin{multline*}
    \frac{1}{p-1}\mathbb{E}\sum_{j\in\mathcal{C},j\neq
    i}|X_j-\tilde{X}_j|^2
    =\frac{1}{p-1}\mathbb{E}\sum_{j\in\mathcal{C},j\neq
    i}|X_j(t_{m-1})-\tilde{X}_j(t_{m-1})|^2\\
    +\frac{1}{p-1}\mathbb{E}\left(\mathbb{E}(
    (|X_j-\tilde{X}_j|^2-|X_j(t_{m-1})-\tilde{X}_j(t_{m-1})|^2)
    |\mathcal{F}_{m-1})\right)
    \end{multline*}
    The first term on the right hand side then becomes $u(t_{m-1})$ by
    Lemma \ref{thm:consistency}.  By \eqref{eq:conditionalexpect}, it is
    clear that
    \[
    \mathbb{E}(
    (|X_j-\tilde{X}_j|^2-|X_j(t_{m-1})-\tilde{X}_j(t_{m-1})|^2)
    |\mathcal{F}_{m-1}) \le
    2|X_j(t_{m-1})-\tilde{X}_j(t_{m-1})|C\eta+C\eta^2.
    \]
    Hence,
    \[
    \mathbb{E}(X_i-\tilde{X}_i)\cdot I_1 \le Cu(t)
    +Cu(t_{m-1})+C\sqrt{u(t_{m-1})}\eta+C\eta^2.
    \]
    where $C$ is independent of $N$.
    Since $u(t_{m-1})\le Cu(t)+C\eta^2$ by \eqref{eq:incrementofsqrtu},
    then
    \[
    \mathbb{E}(X_i-\tilde{X}_i)\cdot I_1 \le Cu(t)+C\eta^2.
    \]

    Letting $Z=(X_1,\ldots, X_N)$, one sees easily that
    $I_2=\chi_i(Z(t))$.  Then, we find
    \[
    Y_i(t)\cdot  I_2(t)=(Y_i(t)-Y_i(t_{m-1}))\cdot
    \chi_i(Z(t))+Y_i(t_{m-1})\cdot \chi_i(Z(t))=J_1+J_2.
    \]
    In $J_2$, $Y_i(t_{m-1})$ is independent of the random batch division
    at $t_{m-1}$. Then, Lemma \ref{thm:consistency} tells us that
    \[
        \mathbb{E}J_2=0.
    \]
    
    Using \eqref{eq:twoprocess}, we have
    \begin{gather}\label{eq:increments}
    Y_i(t)-Y_i(t_{m-1})=-\int_{t_{m-1}}^t
    \chi_i(Z(s))\,ds+\int_{t_{m-1}}^t
    f_i(\tilde{Z}(s))-f_i(Z(s))\,ds.
    \end{gather}
    Since $\chi_i$ is bounded,
    \[
    \left| \mathbb{E}\int_{t_{m-1}}^t \chi_i(Z(s))\cdot\chi_i(Z(t))\,ds
    \right| \le C\eta,
    \]
    where $C$ is related to the infinity norm of the variance of
    $\chi_i(t)$. This is the main term in the local truncation error.
    Just as we did for $I_1$,
    \[
    |f_i(\tilde{Z}(s))-f_i(Z(s))|\le
    C\frac{1}{p-1}\sum_{j\in\mathcal{C},j\neq
    i}(|X_i-\tilde{X}_i|+|X_j-\tilde{X}_j|)
    =C|X_i-\tilde{X}_i|+\frac{C}{p-1}\sum_{j\in\mathcal{C}, j\neq
    i}|X_j-\tilde{X}_j|.
    \]
    Since
    \begin{multline*}
    \mathbb{E}\frac{1}{p-1}\sum_{j\in\mathcal{C}, j\neq
    i}|X_j-\tilde{X}_j| \le
    \mathbb{E}\frac{1}{p-1}\sum_{j\in\mathcal{C}, j\neq
    i}|X_j(t_{m-1})-\tilde{X}_j(t_{m-1})| \\
    +\mathbb{E}\left(\frac{1}{p-1}\sum_{j\in\mathcal{C}, j\neq
    i}\mathbb{E}\left(|X_j(s)-\tilde{X}_j(s)
    -(X_j(t_{m-1})-\tilde{X}_j(t_{m-1}))|
    \Big|\mathcal{F}_{m-1}\right)\right)
    \end{multline*}
    This is controlled by $C\sqrt{u(t_{m-1})}+C\eta$. Hence, 
    \[
    \mathbb{E}J_1\le C\eta+C\sqrt{u(t_{m-1})}\eta+C\eta^2\le
    C\eta+Cu+C\eta^2,
    \]
    where the $\eta$ term is from the variance term.
    
    Eventually,
    \[
        \frac{d}{dt}u\le C(u+\eta+\eta^2)\le Cu+C\eta.
    \]
    Applying Gr\"onwall's inequality, we find
    \[
        \sup_{t\le T}u(t)\le C(T)\eta.
    \]
    The last claim for $W_2$ distance follows from the definition of $W_2$.
\end{proof}

Note that the one marginal $\mu_N^{(1)}$ is the distribution of $X_i$
for any $i$, which is deterministic. This should be distringuished from
the empirical measure $\mu_N=\frac{1}{N}\sum_i \delta(x-X_i(t))$ which
is random.  As can be seen from the proof,  the main contribution in the
local truncation error comes from the variance of the the noise
$\chi_i$. We believe the error bound here can be made independent of $T$
due to the intrinsic structure of SVGD discussed above in section
\ref{sec:svgd}. Often, such long time estimates are established by some
contracting properties, so one may want to find the intrinsic converging
structure of \eqref{eq:discreteODE}.  However, rigorously establishing
such results seems nontrivial due to the nonlocal effects of the
external forces (the $\nabla V$ terms).

\section{Numerical Experiments}\label{sec:experiment}

We consider some test examples in \cite{liu2017stein} to validate
RBM-SVGD algorithm and compare with the original SVGD algorithm. In
particular, in a toy example for 1D Gaussian mixture, RBM-SVGD is
proved to be effective in the sense that the particle system converges
to the expected distribution with less running time than the original
SVGD method. A more practical example, namely Bayesian logistic
regression, is also considered to verify the effectiveness of RBM-SVGD
on large datasets in high dimension. Competitive prediction accuracy is
presented by RBM-SVGD, and less time is needed. Hence, RBM-SVGD seems to
be a more efficient method.

All numerical results in this section are implemented with MATLAB R2018a
and performed on a machine with Intel Xeon CPU E5-1650 v2 @ 3.50GHz and
64GB memory.

\subsection{1D Gaussian Mixture}

As a first example, we use the Gaussian mixture probability in
\cite{liu2016stein} for RBM-SVGD. The initial distribution is
$\mathcal{N}(-10, 1)$, Gaussian with mean $-10$ and variance 1. The
target density is given by the following Gaussian mixture
\begin{align}\label{goalpdf}
    \pi(x) = \dfrac{1}{3} \cdot \dfrac{1}{\sqrt{2\pi}}
    e^{-(x+2)^2/2} + \dfrac{2}{3} \cdot \dfrac{1}{\sqrt{2\pi}}
    e^{-(x-2)^2/2}.
\end{align}
The kernel for the RKHS is the following Gaussian kernel
\begin{gather}\label{eq:Gaussiankernel}
    K(x)=\dfrac{1}{\sqrt{2\pi h}}e^{-x^2/2h},
\end{gather}
where $h$ is the bandwidth parameter. For a fair comparison with the
numerical results in \cite{liu2016stein}, we first reproduce their
results using $N=100$ particles and dynamic bandwidth parameter $h =
\frac{\mathrm{med}^2}{2\log N}$, where $\mathrm{med}$ is the median of
the pairwise distance between the current points. Since dynamic
bandwidth is infeasible for RBM-SVGD, we produce the results with fixed
bandwidth $h=2$ for the comparison between SVGD and RBM-SVGD.  The
RBM-SVGD uses Algorithm~\ref{ssvgd} with initial stepsize being 0.2 and
the following stepsizes are generated from AdaGrad.  Different batch
sizes are tested to demonstrate the efficiency of RBM-SVGD. Numerical
results are illustrated in Figure~\ref{fig:1dgm-comparison} with the
same initial random positions of particles following
$\mathcal{N}(-10,1)$ distribution.

\begin{figure}[htbp]
    \centering
    \includegraphics[width=\textwidth]{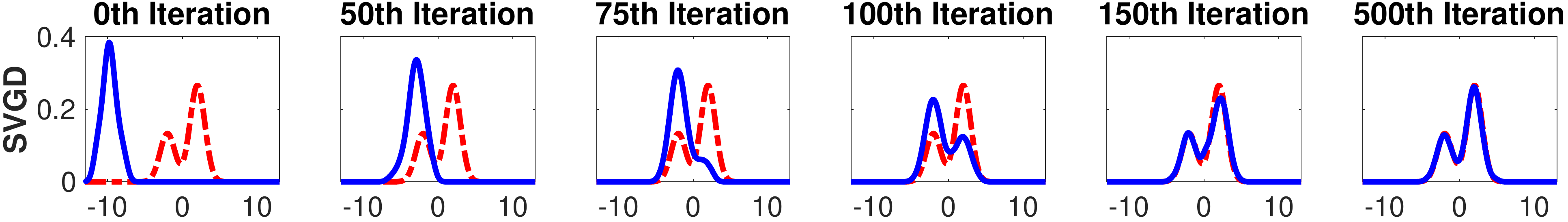}
    \includegraphics[width=\textwidth]{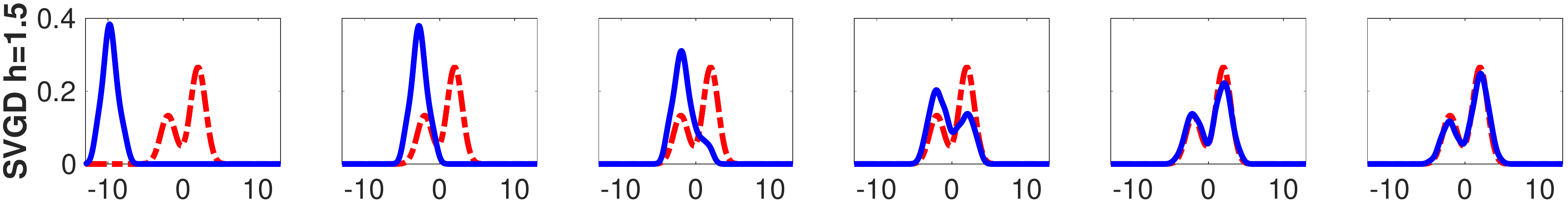}
    \includegraphics[width=\textwidth]{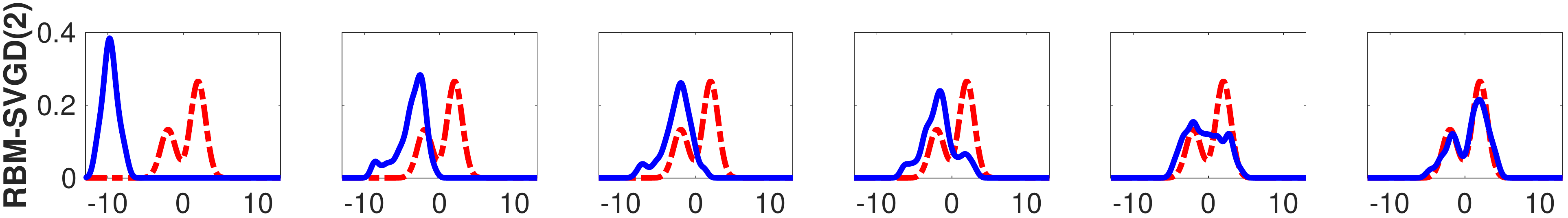}
    \includegraphics[width=\textwidth]{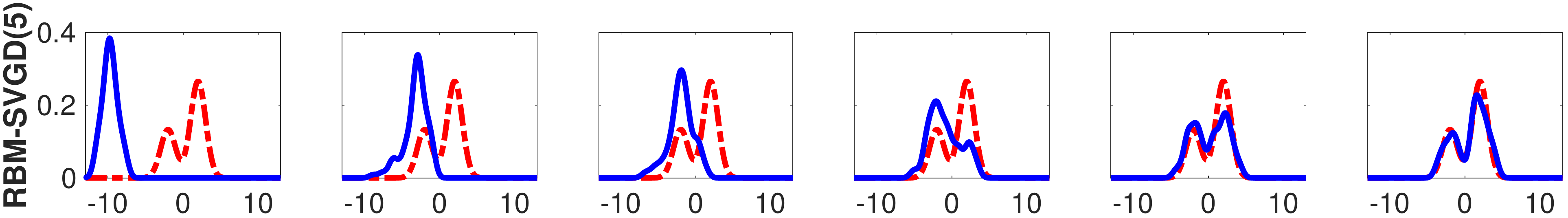}
    \includegraphics[width=\textwidth]{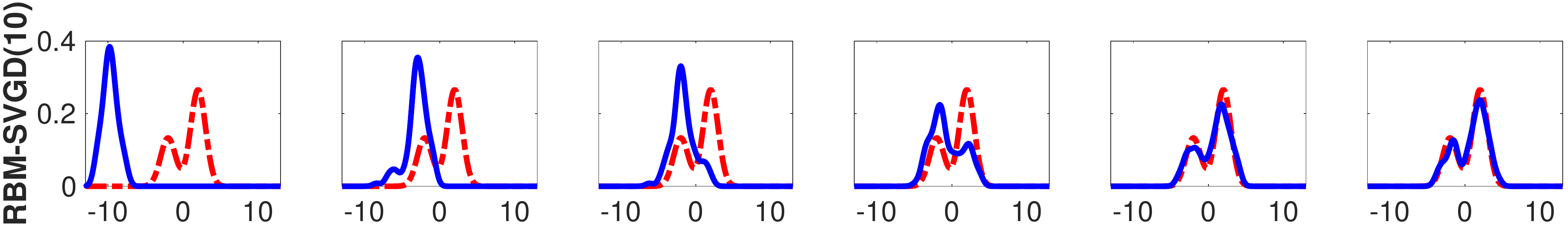}
    \includegraphics[width=\textwidth]{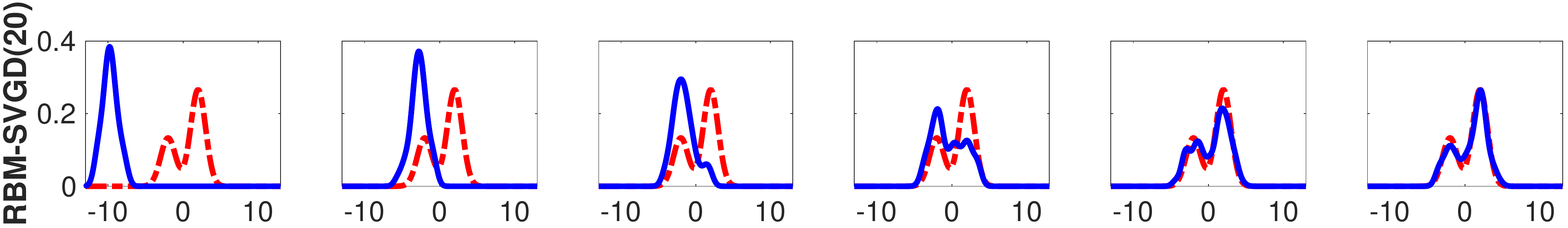}
    \caption{Comparison between SVGD and RBM-SVGD with different
    batch sizes using $N=100$ particles. The first row reproduces
    results in~\cite{liu2016stein}; the second row uses a fixed band
    width $h=2$ with other settings being the same as first row; the
    third to fifth rows apply RBM-SVGD with batch size 2, 5, and 20
    respectively and other settings are the same as the second row.
    In all figures, red dash curves indicate target density function
    whereas blue curves are empirical density estimators (estimated
    using kernel density estimator). }
    \label{fig:1dgm-comparison}
\end{figure}

As stated in \cite{liu2016stein}, the difficulty lies in the
strong disagreement between the initial density function and the
target density $\pi(x)$. According to the first and second row
in Figure~\ref{fig:1dgm-comparison}, SVGD with and without fixed
bandwidth parameter capture the target density efficiently and
the corresponding convergent behaviors are similar to each other.
Reading from the last column of Figure~\ref{fig:1dgm-comparison},
we observe that RBM-SVGD inherits the advantage of SVGD in the sense
that it can conquer the challenge and also show compelling result
with SVGD. When the batch size is small, e.g., $p=2$ or $p=5$, the
estimated densities differ from that of SVGD, and, according to our
experience, the estimated densities are not very stable across several
executions. While, in theory, RBM-SVGD runs $N/p$ times faster than
SVGD. Hence RBM-SVGD with $p=5$ at 500th iteration costs the same as
50 iterations of SVGD. According to Figure~\ref{fig:1dgm-comparison},
RBM-SVGD(2) at 500th iteration significantly outperform the 50th
iteration of SVGD.  As we increase the batch size, as the last two
rows of Figure~\ref{fig:1dgm-comparison}, more stable and similar
behavior as SVGD is observed.

Provided the good performance of RBM-SVGD, we also check the sampling
power and its computational cost. We conduct the following simulations
with $N=256$ particles for 500 iterations with the Gaussian kernel
\eqref{eq:Gaussiankernel}. For RBM-SVGD, we use fixed bandwidth
$h=0.35$ whereas SVGD use the aforementioned dynamic bandwidth
strategy. When we apply SVGD or RBM-SVGD with different batch sizes,
the same initial random positions of particles is used.  For a
given test function $h(x)$, we compute the estimated expectation
$\bar{h}=\frac{1}{N}\sum_{i=1}^N h(X_i(T))$ and the sampling accuracy
is measured via the Minimum Square Error (MSE) over $100$ random
initializations following the same distribution as before:
\[
    \text{MSE} = \frac{1}{100} \sum_{j=1}^{100}( \bar{h}_j
    - \mathbb{E}_{X\sim \pi}h(X))^2,
\]
where $\mathbb{E}_{X \sim \pi}h(X)$ denotes the underlying truth.
Three test functions are explored, $h_1(x)=x$, $h_2(x) = x^2$, and
$h_3(x) = \cos 2x$, with their corresponding true expectations being
$\frac{2}{3}$, $5$, and $\frac{\cos 4}{e^2}$. The reported runtime is
also averaged over $100$ random initializations.

\begin{figure}[htp]
    \centering
    \begin{subfigure}[b]{0.32\textwidth}
        \includegraphics[height=0.8\textwidth]{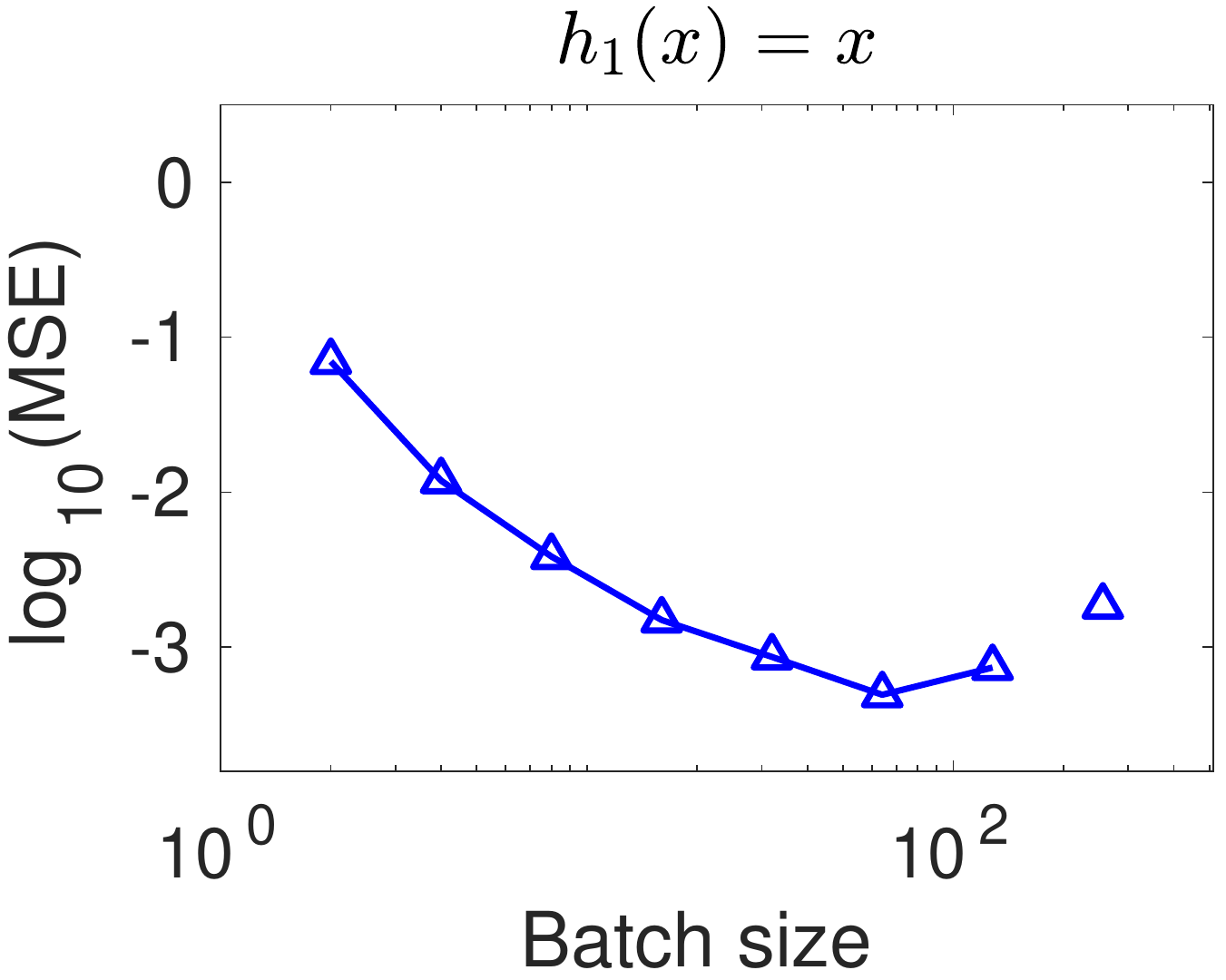}
        \caption{}
        \label{fig:h1}
    \end{subfigure}
    \hspace{.2cm}
    \begin{subfigure}[b]{0.32\textwidth}
        \includegraphics[height=0.8\textwidth]{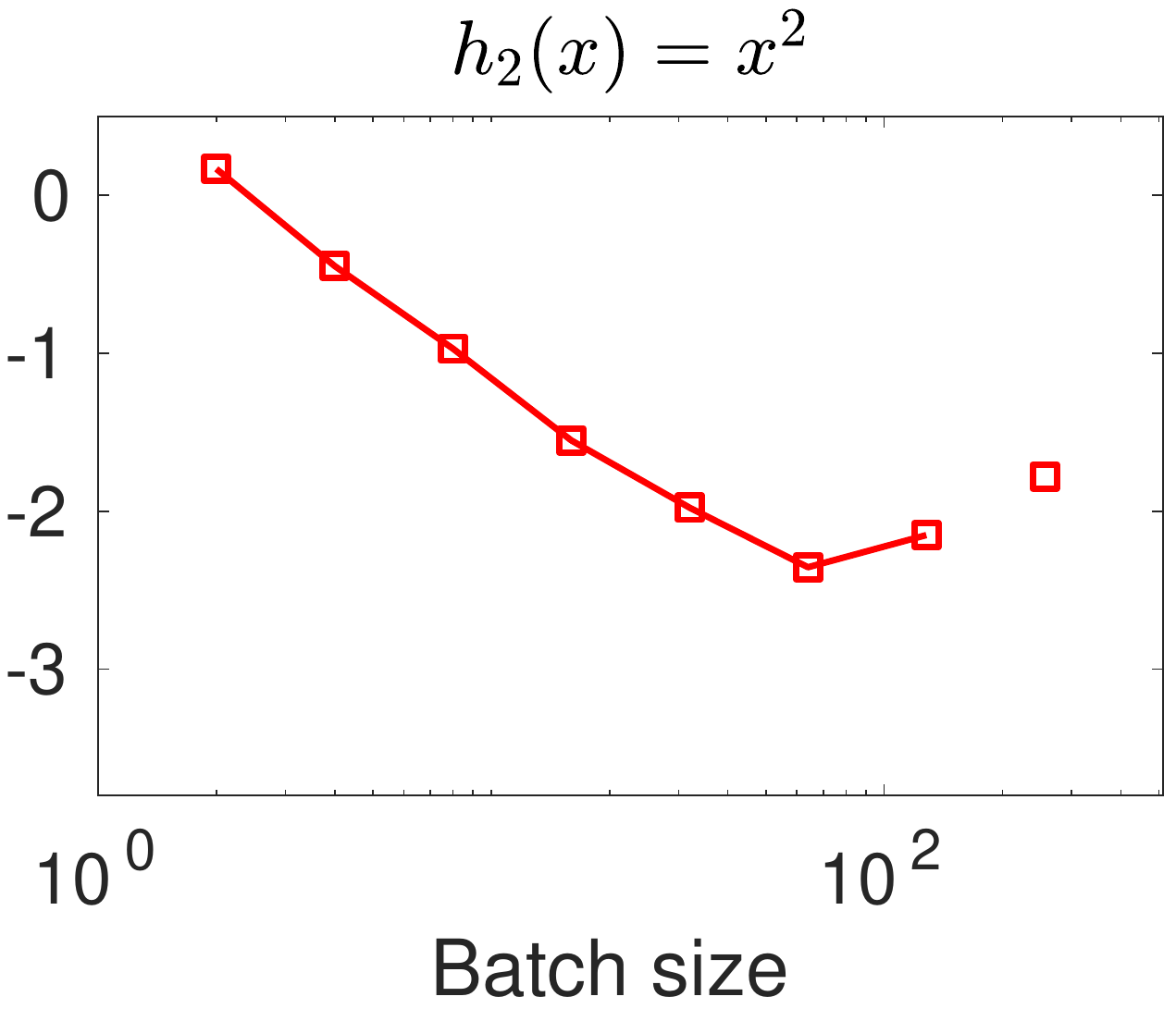}
        \caption{}
        \label{fig:h2}
    \end{subfigure}
    ~
    \begin{subfigure}[b]{0.32\textwidth}
        \includegraphics[height=0.8\textwidth]{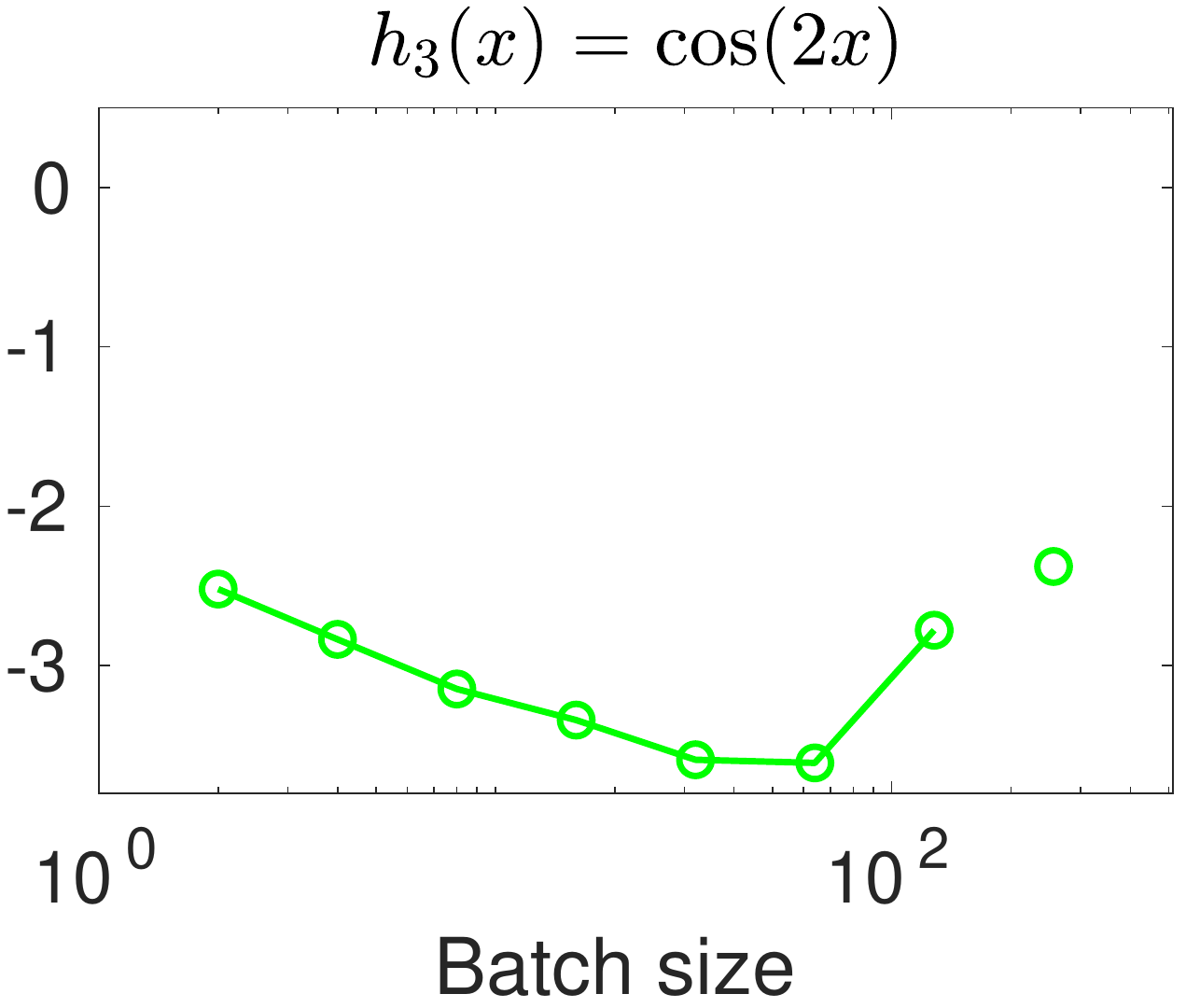}
        \caption{}
        \label{fig:h3}
    \end{subfigure}
    \caption{MSEs of (a) $h_1(x)=x$, (b) $h_2(x)=x^2$, and
    (c) $h_3(x)=\cos 2x$, against different batch sizes.}
    \label{fig:1dgm-estimating}
\end{figure}

\begin{table}[htp]
    \centering
    \caption{Averaged runtime for different batch sizes.}
    \label{tab:1dgm-runtime}
    \begin{tabular}{ccccccccc}
        \toprule
        & \multicolumn{7}{c}{RBM-SVGD} & SVGD \\
        \cmidrule{2-8}
        Batch size & 2 & 4 & 8 & 16 & 32 & 64 & 128 & 256\\
        \midrule
        Runtime(s) & 0.055 & 0.095 & 0.178 & 0.341 & 0.270 & 0.238 &
        0.314 & 0.733 \\
        Speedup & 13.3x & 7.7x & 4.1x & 2.1x & 2.7x & 3.1x &
        2.3x &  \\
        \bottomrule
    \end{tabular}
\end{table}

Figure~\ref{fig:1dgm-estimating} (a), (b), and (c) show the MSE
against different batch sizes for $h_1(x)$, $h_2(x)$, and $h_3(x)$
respectively. The results of RBM-SVGD with different batch sizes are
connected through lines, whereas the results of SVGD are the isolated
points with batch size $p=256$. In general, the estimations of $h_1(x)$
and $h_2(x)$ are better than that of $h_3(x)$, which agrees with the
difficulty of the problems. However, in all three figures, we observe
that the MSE decays first as $p$ increases and then increases for $p
\geq 64$. Such a behavior is due to the choice of bandwidth parameter.
Table~\ref{tab:1dgm-runtime} shows the averaged runtime of RBM-SVGD
and SVGD for different batch sizes. RBM-SVGD is faster than SVGD for
all choices of batch sizes. Ideally, RBM-SVGD with $p=2$ should be 128
times faster than SVGD, which turns out to be $13.3$ times speedup in
runtime. This is due to the nature of Matlab, since Matlab is better
optimized for block matrix operations. We expect that if the code
is implemented with other programming languages, e.g., C++, Fortran,
etc., close-to-optimal speedup should be observed.

\subsection{Bayesian Logistic Regression}\label{sec:regression}

In this experiment, we apply RBM-SVGD to conduct
Bayesian logistic regression for binary classification
for the Covertype dataset with 581012 data points and
54 features~\cite{gershman2012nonparametric}.  Under the same
setting as Gershman~\cite{gershman2012nonparametric,liu2016stein},
the regression weights $w$ are assigned with a Gaussian prior
$p_0(\omega|\alpha)=\mathcal{N}(w,\alpha^{-1})$, and the variance
satisfies $p_0(\alpha)=\Gamma(\alpha, 1,0.01)$, where $\Gamma$
represents the density of Gamma distribution. The inference is applied
on posterior $p(x|D)$ with $x=[w,\log\alpha]$. The kernel $K(\cdot)$ is
taken again to be the same Gaussian kernel as \eqref{eq:Gaussiankernel}.

\begin{figure}[htp]
    \centering
    \includegraphics[width=0.7\textwidth]{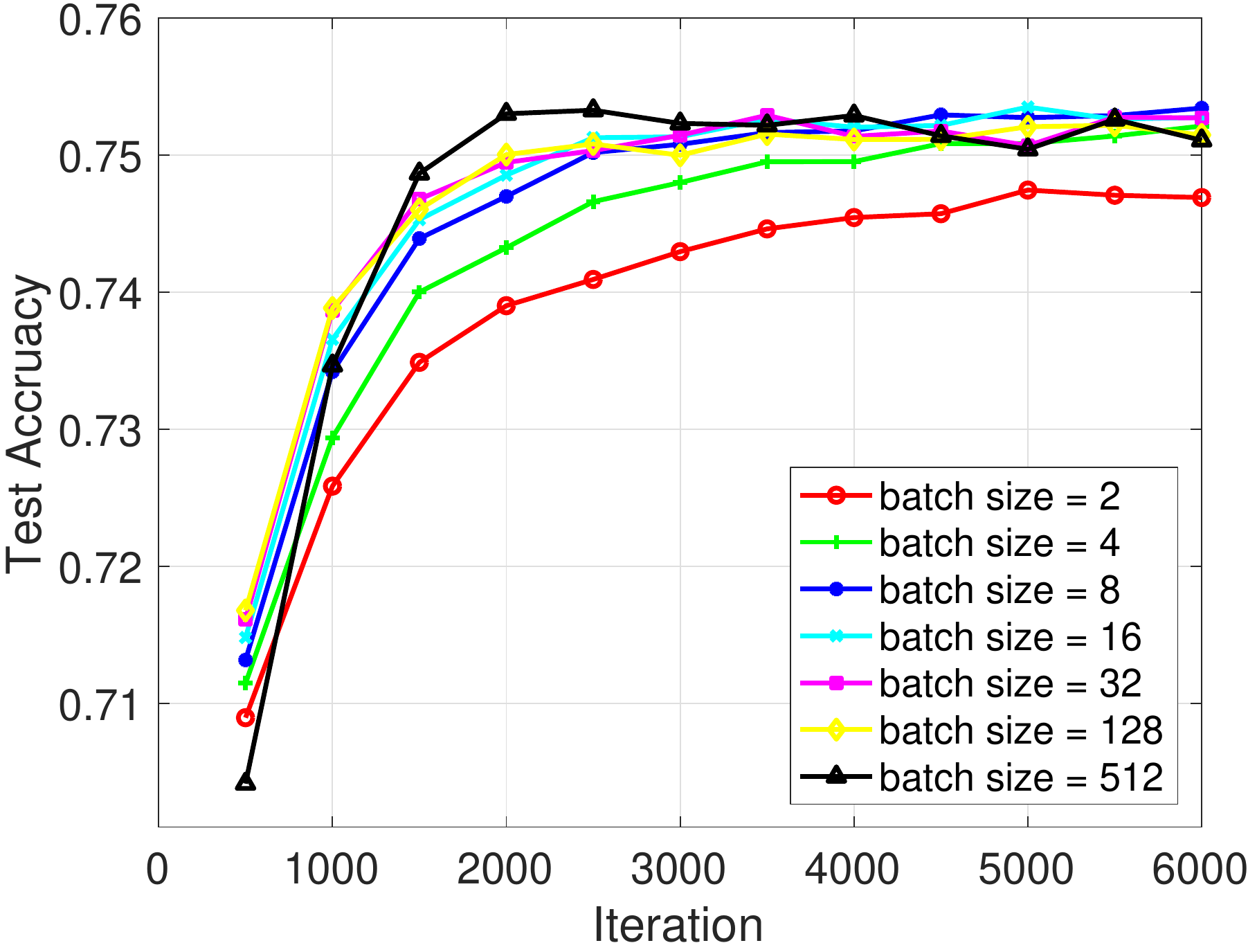}
    \caption{Test accuracy under different batch sizes of RBM-SVGD.}
    \label{fig:bayselr-comparison}
\end{figure}

Since the problem is in high dimension, we adopt $N=512$ particles
$N=512$ in this experiment, which also create more space for
the selection of batch sizes. The training is done on 80\% of
the dataset, and the other 20\% is used as the test dataset.
For particle system \eqref{eq:discreteODE}, the computation of
$-\nabla V=\nabla \log p(x)$ is expensive. Hence, we use the
same strategy as mentioned in \cite[section 3.2]{liu2016stein},
i.e. using data-mini-batch~\footnote{To avoid confusion with
our batch of particles, we call it data-mini-batch instead.}
of the data to form a stochastic approximation of $p(x)$ with the
data-mini-batch size being $100$. Since $\nabla \log p$ depends only
on $x$ as in Algorithm~\ref{ssvgd}, at each time step, we call this
function only once and compute $\nabla\log p$ for all particles,
which means the same mini-batches are used for $\nabla \log p$ of
all particles.  In this experiment, we use fixed bandwidth $h = 256$
for RBM-SVGD and dynamic bandwidth strategy for SVGD. The RBM-SVGD
uses Algorithm~\ref{ssvgd} with initial stepsize being 0.05 and the
following stepsizes are generated from AdaGrad.  Different batch
sizes are tested to demonstrate the efficiency of RBM-SVGD. Each
configuration is executed on $50$ random initializations. The
averaged test accuracies for different batch sizes are illustrated
in Figure~\ref{fig:bayselr-comparison}.

\begin{table}[htp]
    \centering
    \caption{Average runtime of $6000$ iterations}
    \label{tab:bayselr-runtime}
    \begin{tabular}{cccccccc}
        \toprule
        & \multicolumn{6}{c}{RBM-SVGD} & SVGD \\
        \cmidrule{2-7}
        Batch size & 2 & 4 & 8 & 16 & 32 & 128 & 512\\
        \midrule
        Runtime(s) & 8.59 & 11.24 & 16.28 & 26.15 & 21.66 & 19.42 &
        47.01 \\
        Speedup & 5.5x & 4.2x & 2.9x & 1.8x & 2.2x & 2.4x & \\
        \bottomrule
    \end{tabular}
\end{table}

\begin{table}[htp]
    \centering
    \caption{Statistics of RBM-SVGD and SVGD.}
    \label{tab:bayselr-stability}
    \begin{tabular}{cccccccc}
        \toprule
        & Iteration & 1000 & 2000 & 3000 & 4000 & 5000 & 6000\\
        \midrule
        \multirow{2}{*}{\makecell{RBM-SVGD\\$p=2$}} & Mean &
        0.7090 & 0.7349 & 0.7409 & 0.7446 & 0.7457 & 0.7471 \\
        \cmidrule{2-8}
        & Std & 0.0045 & 0.0040 & 0.0040 & 0.0034 & 0.0034 & 0.0038\\
        \midrule
        \multirow{2}{*}{\makecell{RBM-SVGD\\$p=8$}} & Mean & 0.7342 &
        0.7470 & 0.7508 & 0.7518 & 0.7527 & 0.7534 \\
        \cmidrule{2-8}
        & Std & 0.0073 & 0.0056 & 0.0041 & 0.0045 & 0.0039 & 0.0033 \\
        \midrule
        \multirow{2}{*}{SVGD} & Mean & 0.7347 & 0.7530 & 0.7523 & 0.7529
        & 0.7504 & 0.7511 \\
        \cmidrule{2-8}
        & Std & 0.0068 & 0.0048 & 0.0071 & 0.0048 & 0.0061 & 0.0062 \\
        \bottomrule
    \end{tabular}
\end{table}

As shown in Figure~\ref{fig:bayselr-comparison}, RBM-SVGD is almost as
efficient as SVGD even for small batch sizes. When $p=2$, the test
accuracy converges to a value slightly off that of SVGD. RBM-SVGD with
$p=4$ converges to the same accuracy as SVGD but at a slower convergent
rate. RBM-SVGD with batch size greater than $4$, we observe similar
convergent behavior as that of SVGD. The runtime of RBM-SVGD, as shown
in Table~\ref{tab:bayselr-runtime}, is faster than SVGD, where the
runtime of $6000$ iterations is reported. Comparing to the similar
runtime table for 1D Gaussian mixture example, as
Table~\ref{tab:1dgm-runtime}, the acceleration of RBM-SVGD is not as
significant as before. This is due to the linear but expensive
evaluation of $\nabla \log p$, where RBM-SVGD and SVGD spend the same
amount time in the evaluation each iteration. Although the evaluation of
$\nabla \log p$ is expensive, it is linear in $N$. As $N$ increases, the
advantage of RBM-SVGD would be more significant. In
Table~\ref{tab:bayselr-stability}, we list the mean and standard
deviation of RBM-SVGD with $p=2$, $p=8$, and SVGD of different
iterations. Based on the statistics, we conclude that RBM-SVGD and SVGD
are of similar prediction power and RBM-SVGD is efficient also in
high-dimensional particle systems as well.

\section{Conclusion}

We have applied the random batch method for interacting particle systems
to SVGD, resulting in RBM-SVGD, which turns out to be a cheap sampling
algorithm and inherits the efficiency of the original SVGD algorithm.
Theory and Numerical experiments have validated the algorithm and hence,
it can potentially have many applications, like Bayesian inference.
Moreover, as a hybrid strategy, one may
increase the batch size as time goes on to increase the accuracy, or apply some variance reduction approach.

\section*{Acknowledgement} This work is supported by KI-Net
NSF RNMS11-07444.  The work of L. Li was partially sponsored by Shanghai Sailing Program 19YF1421300, the work of Y. Li was partially supported by
OAC-1450280, the work of J.-G. Liu was partially supported by NSF
DMS-1812573, and the work of J. Lu was supported in part by NSF
DMS-1454939.

\bibliographystyle{unsrt} \bibliography{sdealg}

\appendix

\section{Proof of Lemma \ref{thm:consistency}}\label{app:proofconsistency}

\begin{proof}[Proof of Lemma \ref{thm:consistency}]
    The proof is pretty like the one in \cite{jin2018random}. We use the
    random variable $I(i, j)$ to indicate whether $i$ and $j$ are in a
    common batch. In particular, $I(i,j)=1$ if $i$ and $j$ are in a
    common batch while $I(i,j)=0$ if otherwise.  Then, it is not hard to
    compute (see \cite{jin2018random})
    \begin{gather}\label{eq:Indicatorexp}
        \begin{split}
            & \mathbb{E}1_{I(i, j)=1}=\frac{p-1}{N-1},\\
            & \mathbb{P}(I(i,j)I(j,k)=1)=\frac{(p-1)(p-2)}{(N-1)(N-2)}.
        \end{split}
    \end{gather}

    We note
    \begin{gather}
        \chi_i(x)=\frac{1}{N}\sum_{j: j\neq i}
        \left(1-\frac{N-1}{p-1}I(i,j)\right)F(x_i, x_j).
    \end{gather}
    The first equation in \eqref{eq:Indicatorexp} clearly implies that
    $\mathbb{E}\chi_i(x)=0$. Using \eqref{eq:Indicatorexp}, we can
    compute directly that
    \begin{multline*}
        \mathbb{E}|\chi_i(x)|^2 = \frac{1}{N^2}
        \Big( \sum_{j: j\neq i}(\frac{N-1}{p-1}-1)|F(x_i, x_j)|^2 \\
        +\sum_{j,k: j\neq i, k\neq i, j\neq k}
        \left(\frac{(N-1)(p-2)}{(N-2)(p-1)}-1\right)
        F(x_i, x_k)\cdot F(x_i, x_j) \Big)
    \end{multline*}
    Rearranging this, we get the claimed expression.
\end{proof}

\end{document}